\newtcolorbox{algobox}[1][]{
  enhanced,
  sharp corners,
  colback=white,
  colframe=black,
  fonttitle=\bfseries,
  top=1pt, bottom=1pt, left=3pt, right=3pt,
  attach boxed title to top left={
    yshift=-2mm, xshift=2mm
  },
  boxed title style={
    size=small,
    colback=white,
    colframe=black,
    sharp corners
  },
  title=#1
}
\newtheorem{theoremOne}{Theorem}[section]
\newtheorem{theoremTwo}{Theorem}[section]
\newcommand{\cmark}{\ding{51}}
\newcommand{\xmark}{\ding{55}}
\definecolor{mygray}{gray}{.9}
\newcommand{\ours}{N\textsc{erve}}
\newcommand{\phz}{\phantom{0}} 
\newcommand{\mr}[1]{\mathrm{#1}}
\newcommand{\mypar}[1]
{\vspace{3pt}\noindent\textbf{#1~}}
\definecolor{cliptta_color}{rgb}{1, 0.97, 0.92}
\definecolor{lightgray}{rgb}{0.95, 0.95, 0.95}
\definecolor{BrickRed}{rgb}{0.8, 0.25, 0.33}
\definecolor{OliveGreen}{rgb}{0.33, 0.42, 0.18}
\definecolor{DarkGreen}{RGB}{0,100,0}
\definecolor{DarkRed}{RGB}{139,0,0}
\newcommand{\better}[1]{\textcolor{DarkGreen}{\scriptsize{\,\,($\uparrow$#1)}}}
\newcommand{\worse}[1]{\textcolor{DarkRed}{\scriptsize{\,\,\,($\downarrow$#1)}}}
\newcommand{\rcol}{\rowcolor{orange!12}}
\algnewcommand{\LineComment}[1]{\Statex \textcolor{gray}{// \emph{#1}}}
\pgfplotsset{compat=1.18}
\newcommand{\mathbbm}[1]{\text{\usefont{U}{bbm}{m}{n}#1}} 
\newcommand{\tr}[1]{#1^{\top}}
\newcommand{\One}{\mathbbm{1}}
\newcommand\wt[1]{\hstretch{1.25}{\tilde{\hstretch{.8}{#1}}}}
\definecolor{wacvblue}{rgb}{0.21,0.49,0.74}
\crefname{section}{Sec.}{Secs.}
\Crefname{section}{Section}{Sections}
\Crefname{table}{Table}{Tables}
\crefname{table}{Tab.}{Tabs.}
\title{\ours{}: \underline{N}eighbourhood \& \underline{E}ntropy-guided \underline{R}andom-walk for training free open-\underline{V}ocabulary s\underline{E}gmentation}
\author{Kunal Mahatha \textsuperscript{\Letter} \quad 
Jose Dolz \quad Christian Desrosiers \\[1em]
LIVIA, ÉTS Montréal, Canada \\
International Laboratory on Learning Systems (ILLS), \\
McGILL - ETS - MILA - CNRS - Université Paris-Saclay - CentraleSupélec, Canada \\
\Letter \ \tt \small kunal.mahatha.1@ens.etsmtl.ca\\[0.5em]}
\begin{document}
\maketitle
\begin{abstract}

\noindent Despite recent advances in Open-Vocabulary Semantic Segmentation (OVSS), existing training-free methods face several limitations: use of computationally expensive affinity refinement strategies, ineffective fusion of transformer attention maps due to equal weighting or reliance on fixed-size Gaussian kernels to reinforce local spatial smoothness, enforcing isotropic neighborhoods. We propose a strong baseline for training-free OVSS termed as NERVE (Neighbourhood \& Entropy-guided Random-walk for open-Vocabulary sEgmentation), which uniquely integrates global and fine-grained local information, exploiting the neighbourhood structure from the self-attention layer of a stable diffusion model. We also introduce a stochastic random walk for refining the affinity rather than relying on fixed-size Gaussian kernels for local context. This spatial diffusion process encourages propagation across connected and semantically related areas, enabling it to effectively delineate objects with arbitrary shapes. Whereas most existing approaches treat self-attention maps from different transformer heads or layers equally, our method uses entropy-based uncertainty to select the most relevant maps. Notably, our method does not require any conventional post-processing techniques like Conditional Random Fields (CRF) or Pixel-Adaptive Mask Refinement (PAMR). Experiments are performed on 7 popular semantic segmentation benchmarks, yielding an overall state-of-the-art zero-shot segmentation performance, providing an effective approach to open-vocabulary semantic segmentation.

\end{abstract}
    
\section{Introduction}
\label{sec:intro}

Deep learning models have made significant strides in dense image prediction tasks, particularly in semantic segmentation \cite{minaee2021image}. However, their reliance on a fixed set of predefined classes limits their applicability in numerous real-world scenarios where the categories of interest are often unknown or variable. To address this limitation, open-vocabulary semantic segmentation (OVSS) has recently emerged as a promising alternative \cite{bucher2019zero, zhao2017open, maskclip}. OVSS enables models to recognize and segment novel categories that were not encountered during training, offering greater flexibility and generalization. 
\footnote{Our code is publicly available at: \url{https://github.com/kunal-mahatha/nerve/}}

Most OVSS methods are built on vision-language models, particularly Contrastive Language-Image Pre-training (CLIP) \cite{clip}, which has demonstrated strong zero-shot performance in visual recognition. These methods generally fall into two broad categories: \emph{training-based} \cite{cho2024cat, xu2023side, yu2023convolutions,barsellotti2023enhancing, ghiasi2022scaling, liang2023open, qin2023freeseg, xu2023open, xu2022simple} and \emph{training-free} \cite{clearclip, sclip, maskclip} approaches. Methods in the first category typically include a fully-supervised (or weakly-supervised) training step, where pixel-wise (or image-wise) annotations from a limited set of categories are used to transfer language–vision relationships from image level to the pixel level. However, these methods still depend on extensive annotations for related categories, and their performance can be influenced by the choice of training dataset used for adaptation \cite{naclip}. 

In contrast, training-free methods \cite{barsellotti2024fossil, bousselham2024grounding, corradini2024freeseg, ovdiff, li2023clip, luo2024emergent, shin2022reco, sclip, wysoczanska2024clip, maskclip,clearclip} do not require access to additional data for adaptation, making them suitable for realistic scenarios where large labeled datasets are scarce and novel classes cannot be anticipated. Several methods leverage the visual representations learned by image-level supervised vision transformers (ViTs) such as CLIP. Some approaches modify the self-attention mechanism—for instance, by incorporating query-query or key-key interactions (also known as self-self attention) \cite{bousselham2024grounding, li2023clip, maskclip, sclip, clearclip}. Others directly use the value vectors from CLIP as dense features for pixel-level classification \cite{maskclip}. An alternative line of work employs additional vision encoders that have been pre-trained using supervised \cite{naseer2021intriguing}, self-supervised \cite{he2020momentum}, or unsupervised \cite{simeoni2023unsupervised} learning objectives. More recently, state-of-the-art performance in OVSS 
has been achieved by harnessing visual features from generative models such as Stable Diffusion \cite{barsellotti2024fossil, corradini2024freeseg, ovdiff, xu2023open, diffseg, iseg}. In this paper, we consider the more challenging scenario of training-free OVSS.

Despite the recent advancements in OVSS, current training-free methods for this problem still face notable limitations. First, most of these methods obtain segmentation predictions by combining attention maps from the last ViT layer \cite{iseg,naclip,ding2023open,clearclip,sclip,proxyclip} or from multiple layers with upsampling \cite{khani2023slime,sun2024cliper}, without taking into account their relative usefulness in the final prediction (e.g., simply averaging the maps). Moreover, methods that rely directly on CLIP’s coarse attention maps typically require an additional segmentation refinement step to address their low spatial resolution \cite{sclip,clearclip,naclip}. This limitation can be partially mitigated by leveraging the finer-grained attention maps produced by diffusion models \cite{iseg,diffseg}. Nevertheless, this comes at a substantial computational cost, as it involves operations over a large, dense similarity matrix that models interactions between all pairs of spatial positions. Additionally, since attention maps do not explicitly capture local spatial relationships, existing approaches may struggle to preserve the contextual integrity of individual objects. To address this, \cite{naclip} introduced a simple yet effective solution: augmenting attention maps with 2D isotropic Gaussian kernels centered on each patch to increase the attention weight of neighboring regions. While this improves OVSS performance, it assumes that local object context is circular and fixed in size, an assumption that often does not hold in practice. As a result, a separate refinement step is required in post-processing to improve the segmentation. 

To address these limitations, we introduce \ours{}, a training-free and computationally efficient open-vocabulary semantic segmentation (OVSS) method that exploits neighborhood structure and entropy-guided random walks. \ours{} formulates segmentation as a linear stochastic process, where image patches are treated as graph nodes and edge weights determine the transition probabilities between nodes during a random walk. Rather than relying on fixed-size Gaussian kernels to model local object context \cite{naclip}, our method promotes locality within the random walk by assigning higher transition probabilities to neighboring regions. This spatial diffusion process encourages propagation across connected and semantically related areas, enabling the random walk to effectively delineate objects with arbitrary shapes. \ours{} also captures pairwise relationships across the entire image efficiently by leveraging the low-rank structure of the self-attention matrix. In contrast to current approaches that explicitly reconstruct the full attention matrix—resulting in quadratic scaling with the number of patches \cite{iseg}—our method scales linearly, significantly accelerating the segmentation refinement process. Finally, whereas most existing approaches treat self-attention maps from different transformer heads or layers equally \cite{iseg,naclip,ding2023open,clearclip,sclip,proxyclip,khani2023slime,sun2024cliper}, our method uses entropy-based uncertainty to select the most relevant maps for the random walk.

Our main contributions can be summarized as follows:
\begin{itemize}\setlength\itemsep{-1em}
    \item We propose \ours{}, a novel training-free method for open-vocabulary semantic segmentation (OVSS) based on neighborhood- and entropy-guided random walks. A key innovation of \ours{} is its ability to efficiently capture \emph{both local and global relationships} within an image, enabling faster and more accurate segmentation. By accounting for the uncertainty across self-attention maps during their combination, \ours{} also suppresses noisy relationships that can lead to segmentation errors.\\
    \item We motivate our method using a stochastic process that models an infinite-step random walk on a graph, and show how the resulting segmentation can be computed efficiently through either exact or approximate inference. \\    
    \item We demonstrate the state-of-the-art performance of \ours{} through comprehensive experiments conducted on five benchmark datasets and two variants. Additionally, the contribution of each individual component is validated through extensive ablation studies.
\end{itemize}

\begin{figure*}[t]
    \centering
    \includegraphics[width=.8\textwidth]{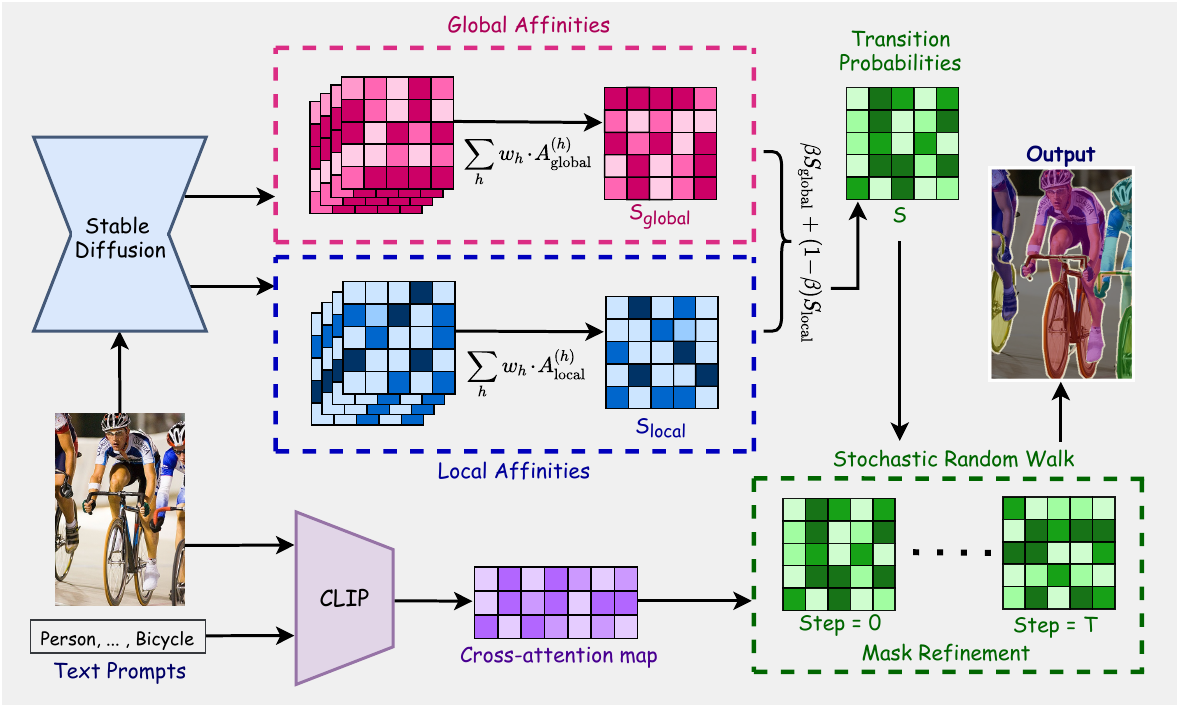}
    \caption{\textbf{Overview of our training-free open-vocabulary segmentation pipeline.} Given an image and a set of text prompts, we extract cross-attention maps using CLIP and self-attention maps from a Stable Diffusion encoder. We compute entropy-guided fusion across attention heads $h$ to obtain global (\(A_{\text{global}}\)) and local (\(A_{\text{local}}\)) affinities. These are normalized and linearly combined into a final stochastic matrix \(S\), which is used in a truncated stochastic random walk to propagate semantic information and generate refined segmentation masks.}
    \label{fig:pipeline}
\end{figure*}

\section{Related Works}
\label{sec:related_works}

Open-vocabulary semantic segmentation (OVSS) has gained significant attention in recent years \cite{clipes, ranasinghe2023perceptual, shin2022reco}. Unlike traditional semantic segmentation, which assumes a fixed set of categories shared between training and testing, OVSS aims to segment objects belonging to arbitrary, potentially unseen categories. Existing methods for this task can be broadly categorized in into training-based \cite{cha2023learning, luo2023segclip, ren2023viewco, xu2023learning} and training-free approaches \cite{lin2024tagclip,sclip,maskclip,clearclip}.

Training-based methods for OVSS can be further categorized into fully-supervised and weakly-supervised approaches. Fully-supervised methods \cite{barsellotti2023enhancing, ghiasi2022scaling, liang2023open, qin2023freeseg, xu2023open, xu2023side, xu2022simple} adapt a pre-trained CLIP model for segmentation by leveraging pixel-level annotations from a fixed set of classes, while aiming to retain its generalization to unseen categories. However, the adaptation datasets used in these approaches often exhibit significant class overlap with the test images, potentially inflating performance on unseen classes. In contrast, weakly supervised approaches adapt the CLIP model using only image-level annotations, typically provided as textual descriptions of the categories present in each image \cite{cai2023mixreorg, cha2023learning, chen2023exploring, liu2022open, wu2023clipself, groupvit, zhang2023uncovering}. Visual–textual alignment is generally achieved through a contrastive loss \cite{groupvit}, mirroring CLIP's original pre-training strategy. This alignment can be further improved by incorporating auxiliary learning techniques, such as online pixel clustering \cite{liu2022open} or additional contrastive objectives between features from clean and corrupted image variants \cite{cai2023mixreorg}.

In contrast to training-based approaches, training-free methods \cite{maskclip,sclip,clearclip,proxyclip,car,naclip} repurpose vision-language models like CLIP—originally designed for classification—for segmentation tasks without additional learning. These methods typically rely on CLIP’s embeddings or adapt its attention maps (or their variants) to produce segmentation outputs. For instance, MaskCLIP \cite{maskclip} ignores the final's layer self-attention maps and instead uses value embeddings of this layer to achieve pixel-level segmentation. Other models like SCLIP \cite{sclip} and ClearCLIP \cite{clearclip} replace standard self-attention maps with query-to-query or key-to-key attention mechanisms, known as self-self attention. While many methods rely on CLIP’s image encoder to extract visual features, alternative approaches instead utilize diffusion models for this purpose \cite{xu2023open,ovdiff,diffseg,iseg}. For instance, DiffSegmenter \cite{diffseg} and iSeg \cite{iseg} exploit self- and cross-attention maps from a pre-trained Stable Diffusion model \cite{stablediffusion} to enable open-vocabulary segmentation.

Our work introduces a novel method for OVSS that extends the neighborhood-aware strategy of NACLIP \cite{naclip} and the iterative refinement approach of iSeg \cite{iseg} in a efficient framework based on random walks. Unlike NACLIP, which uses fixed-size Gaussian kernels to reinforce attention between neighboring regions, our approach increases the transition probabilities toward neighboring regions during the random walk. Additionally, we introduce a non-zero stopping probability that encourages locality in the segmentation process. Compared to the refinement strategy of iSeg, which involves computing and manipulating a large dense matrix of size $N^2$ (with $N\!=\!H\!\times\!W$, where $H$ and $W$ denote the height and width of the feature map), our \ours{} method leverages the low-rank structure of the global attention matrix and the sparsity of the local attention matrix to reduce the computational complexity to be \emph{linear} in $N$. Finally, related methods often aggregate attention maps from different ViT heads or layers, typically using simple averaging \cite{iseg,naclip,ding2023open,clearclip,sclip,proxyclip,khani2023slime,sun2024cliper} or fixed weighting schemes \cite{diffseg}. In contrast, our approach introduces a dynamic weighting mechanism guided by entropy-based uncertainty, enabling more adaptive and informative attention fusion.

\section{Methodology}
\label{sec:methodology}


\subsection{Random-Walk Model for OVSS}\label{sec:rw-ovss}

Our training-free OVSS approach, which is summarized in Figure \ref{fig:pipeline}, is based on a stochastic process modeling a random-walk on a graph $\mathcal{G} = (\mathcal{V}, \mathcal{E})$, where each node $i\!\in\!\mathcal{V}$ corresponds to a spatial token and each edge $(i, j)\!\in\!\mathcal{E}$ has a weight $a_{i,j}$ encoding the affinity between nodes $i$ and $j$. The main idea is to diffuse the coarse CLIP-based segmentation probabilities using the fine-grained self-attention maps of a diffusion model. 

Let $N\!=\!|\mathcal{V}|$ be the number of tokens/nodes and as $K$ the number classes in the test set. For each step at an arbitrary node $i$, the random walk can either \emph{i}) move to a neighbor node with probability $\alpha\!<\!1$, or \emph{ii}) stop the walk and generate a label with probability $1\!-\!\alpha$. In the first case, we select the neighbor node $j\!\in\!\mathcal{V}$ with probability
\begin{equation}\label{eq:row-norm}
s_{i \to j} \, := \, \frac{a_{i,j}}{\sum_{j'} a_{i,j'}}.
\end{equation}
In the second case, we generate a label $y_k$ with probability $g_{i \to k}$. Denoting as $S\!\in\!\mathbb{R}^{N \times N}$ the row-stochastic node-to-node transition matrix such that $S_{i,j}\!=\!s_{i \to j}$ and $G\!\in \!\mathbb{R}^{N \times K}$ the row-stochastic node-to-label generation matrix such that $G_{i,k}\!=\!g_{i \to k}$. 
The expected probability of generating label $y_k$ in a infinite-length random walk, starting at node $i$, is given by $[P_{\infty}]_{i,k}$ where
\begin{equation}\label{rw-infinite-sum}
P_{\infty} \, := \, (1\!-\!\alpha) \Big(\sum_{t=0}^{\infty} \alpha^t S^t\Big) G.
\end{equation}
with $S^t$ being obtained by multiplying matrix $S$ with itself $t\!-\!1$ times. 

While computing this infinite sum directly is impossible, matrix $P_{\infty}$ can be obtained using a matrix inversion as described in the following theorem.
\begin{theoremOne}
\label{theorem1}
The expected  label probability matrix $P_{\infty}$  of Eq. (\ref{rw-infinite-sum}) can be computed analytically as
\begin{equation}\label{eq:inverse}
P_{\infty} \, = \, (1\!-\!\alpha)\big(I - \alpha S\big)^{-1}G
\end{equation}
\end{theoremOne}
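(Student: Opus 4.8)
The plan is to recognize the infinite sum in Eq.~(\ref{rw-infinite-sum}) as a matrix Neumann (geometric) series and to invoke its closed form. The entire argument reduces to establishing the single identity \(\sum_{t=0}^{\infty} (\alpha S)^t = (I - \alpha S)^{-1}\), after which multiplying on the left by the scalar \((1-\alpha)\) and on the right by \(G\) yields Eq.~(\ref{eq:inverse}) immediately, since \(\alpha^t S^t = (\alpha S)^t\).

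First I would establish convergence together with the invertibility of \(I - \alpha S\). Because \(S\) is row-stochastic by construction (each row sums to one via the normalization in Eq.~(\ref{eq:row-norm}), with nonnegative entries), the matrix norm induced by the \(\ell_\infty\) vector norm, namely the maximum absolute row sum, satisfies \(\|S\|_\infty = 1\). Combined with the assumption \(\alpha < 1\), this gives \(\|\alpha S\|_\infty = \alpha < 1\), so the spectral radius obeys \(\rho(\alpha S) \le \|\alpha S\|_\infty < 1\). This is the crucial inequality: it guarantees both that the partial sums converge, and that \(I - \alpha S\) is nonsingular, since its eigenvalues are \(1 - \alpha\lambda\) for eigenvalues \(\lambda\) of \(S\), and \(|\alpha\lambda| \le \alpha|\lambda| \le \alpha\rho(S) \le \alpha < 1\) forces \(1-\alpha\lambda \neq 0\).

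Next I would derive the closed form via the standard telescoping identity. Writing \(B = \alpha S\) and letting \(S_T = \sum_{t=0}^{T} B^t\) denote the partial sum, one has the algebraic identity \((I - B)\,S_T = I - B^{T+1}\). Since \(\rho(B) < 1\) implies \(B^{T+1} \to 0\) as \(T \to \infty\), taking the limit gives \((I - B)\big(\lim_{T\to\infty} S_T\big) = I\), and multiplying by the (already established) inverse \((I-B)^{-1}\) on the left yields \(\sum_{t=0}^{\infty} (\alpha S)^t = (I - \alpha S)^{-1}\). Substituting this back into Eq.~(\ref{rw-infinite-sum}) and factoring out \((1-\alpha)\) and \(G\) completes the proof.

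I do not anticipate a genuine obstacle, as the result is a direct application of the Neumann series to the discounted transition operator \(\alpha S\). The only point requiring care is justifying that the geometric series actually converges in the matrix setting rather than treating the scalar-to-matrix generalization as automatic; this is why I would make the spectral-radius bound explicit. The row-stochasticity of \(S\) together with the strict inequality \(\alpha < 1\) is precisely what supplies the contraction \(\rho(\alpha S) < 1\) needed for both convergence and invertibility, so these two hypotheses are exactly the ones I would flag as doing the essential work.
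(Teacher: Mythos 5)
Your proof is correct and follows essentially the same route as the paper: both rely on the telescoping identity $(I-\alpha S)\sum_{t=0}^{L}(\alpha S)^t = I - (\alpha S)^{L+1}$ applied to the partial sums, followed by taking the limit $L\to\infty$. Your version is somewhat more careful than the paper's, since you explicitly justify why $(\alpha S)^{L+1}\to 0$ and why $I-\alpha S$ is invertible via the bound $\rho(\alpha S)\le\|\alpha S\|_\infty=\alpha<1$ from row-stochasticity, whereas the paper asserts these facts from $\alpha<1$ alone.
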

\begin{proof}
Consider the partial sum over the $L$ first terms of Eq. (\ref{rw-infinite-sum})
\begin{equation}\label{eq:partial-sum}
\wt{P}_{L} \, := \, (1\!-\!\alpha)\left(\sum_{t=0}^L \alpha^{t} S^{t}\right)\!G.
\end{equation}
Pre-multiplying by $(I-\alpha S)$ gives
\begin{align}
(I-\alpha S) \wt{P}_{L} & \, = \, (1\!-\!\alpha)\left(\sum_{t=0}^L \alpha^{t} S^{t} - \sum_{t=1}^{L+1} \alpha^{t} S^{t}\right)\!G\nonumber\\
& \, = \,  (1\!-\!\alpha)\left(I - (\alpha S)^{L+1}\right)G.
\end{align}
Since $\alpha\!<\!1$, we have that $(\alpha S)^{L+1} \!\to\! 0$ when $L \!\to\! \infty$, hence
\begin{align}
(I-\alpha S) & P_{\infty} \, = \, (1\!-\!\alpha)G \nonumber\\
& P_{\infty} \, = \, (1\!-\!\alpha)\big(I - \alpha S\big)^{-1}G.
\end{align}
\end{proof}
Nevertheless, inverting a large ($N\!\times\!N$) and potentially dense matrix during training and inference is impractical. In the next sections, we show how the random walk probabilities can be computed efficiently using the low-rankness and sparseness of $S$. 

\subsection{Node-to-Node Transition Matrix}\label{sec:node-to-node}

\mypar{Global Affinity Matrix.} Building on recent OVSS methods \cite{ovdiff,diffseg,iseg}, we leverage the attention mechanism of a pre-trained diffusion model to capture global dependencies within the image. Specifically, we use the last two self-attention maps of the Stable Diffusion model \cite{stablediffusion}, which employs a U-Net architecture for its denoising network. Given a feature map with spatial dimensions $H\!\times\!W$ and channel dimension $D$, where each spatial location is treated as a token. The input feature map is first linearly projected into query $Q$, key $K$, and value $V$ representations. Attention scores are then calculated as the scaled dot-product between queries and keys, followed by a softmax normalization:
\begin{equation}
A_{\text{self}} \, := \, \text{softmax}\!\left(\frac{Q\tr{K}}{\sqrt{D}}\right),
\end{equation}
Finally, the output is obtained as $Y\!=\!A_{\text{self}}\, V$. 
Due to the softmax operation, $A_{\text{self}}$ must be constructed explicitly. This leads to a computational complexity that scales quadratically with the number of tokens $N$, posing a challenge for segmentation tasks that require high-resolution inputs. 

To solve this problem, we draw inspiration from linear attention transformers \cite{katharopoulos2020transformers} and discard the softmax function to compute our global affinities matrix as
\begin{equation}\label{eq:single-head}
[A_{\text{global}}]_{i,j} \, := \, \frac{\langle Q_i,\;K_j\rangle}{\|Q_i\|\!\cdot\!\|K_j\|},
\end{equation}
with $\langle\cdot,\cdot\rangle$ denoting the dot product.

\mypar{Local Affinity Matrix.} To model local interactions, we define for each token $i$ the set of its immediate spatial neighbors using an 8-connected neighborhood:
\begin{align*}
\mathcal{N}(i) \, := \, \big\{ j \;:\; &\; \mathbf{x}_j = \mathbf{x}_i + (dh, dw), \\[-1pt]
&\; (dh, dw) \in \{-1, 0, 1\}^2 \setminus \{(0, 0)\} \big\}.
\end{align*}
This ensures that each token exchanges information with its spatially adjacent tokens. We then define our \emph{sparse} local affinity matrix as
\begin{equation}
[A_{\text{local}}]_{i,j} \, := \, 
  \begin{cases}
    \epsilon_{\text{self}}, & i=j,\\
    \frac{\langle Q_i,\;K_j\rangle}{\|Q_i\|\cdot\|K_j\|}, & j\in\mathcal{N}(i),\\
    0,       & \text{otherwise}.
  \end{cases}
\end{equation}
Here, $\epsilon_{\text{self}} > 0$ is a small constant enabling self-transitions in the random walk.

\mypar{Transition Probability Matrix.} To unify the semantic priors captured in the global affinities with the spatial consistency modeled in the local affinities, we apply a linear combination of the two matrices. Toward this goal, we row-normalize them as in Eq. (\ref{eq:row-norm}) to obtain the stochastic matrices $S_{\text{global}}$ and $S_{\text{local}}$. We then compute the transition matrix as 
\begin{equation}\label{eq:label-gen}
S \, := \, \beta S_{\text{global}}
+ (1\!-\!\beta) S_{\text{local}},
\end{equation}
where $\beta \in [0, 1]$ controls the balance between global and local information. This fused affinity serves as the basis for the final segmentation mask.

\subsection{Node-to-Label Generation Matrix}

We leverage CLIP's zero-shot capabilities to define the node-to-label generation matrix $G$. Specifically, we take the projected query embeddings $Q$ of tokens in the last CLIP layer and the projected key embeddings $K$ of class prompts, and compute the cross-attention map as
\begin{equation}
A_{\text{cross}} \, := \, \text{softmax}\!\left(\frac{Q\tr{K}}{\sqrt{D}}\right).
\end{equation}
As before, we convert $A_{\text{cross}}$ to stochastic matrix $G$ by normalizing its rows.

\subsection{Uncertainty-weighted Affinities} 

The global affinity matrix in Eq. (\ref{eq:single-head}) corresponds to a single attention head of the ViT. In our approach, we leverage self-attention maps from 10 different heads: 5 from the final layer and 5 from the penultimate one. A common practice for aggregating these maps is to apply a fixed-weight average, often assigning higher weights to maps from deeper layers \cite{iseg,naclip,ding2023open,clearclip,sclip,proxyclip,khani2023slime,sun2024cliper,diffseg}. However, such fixed strategies fail to capture the image-specific importance of each attention map. To overcome this limitation, we propose a dynamic weighting scheme using entropy-based uncertainty. 

Let $A^{(h)} \!\in\! \mathbb{R}^{N \times N}$ be the attention map from head $h$ in any layer, and denote as $S^{(h)}$ the stochastic matrix obtained by row-normalizing $A^{(h)}$. A single-step random-walk using $S^{(h)}$ would produce a label probability matrix
\begin{equation}
P^{(h)}_{\text{1-step}} \, := \, S^{(h)} G.
\end{equation}
If $A^{(h)}$ models semantically-relevant relations between different parts of the image, we expect the segmentation to be confident. Hence, we can use the entropy of $P^{(h)}_{\text{1-step}}$ to estimate the usefulness of attention map $A^{(h)}$. Specifically, we compute the entropy of $A^{(h)}$ as
\begin{equation}
\mathcal{H}^{(h)} \, := \, -\frac{1}{N} \sum_{i=1}^{N} \sum_{k=1}^{K} \big[P^{(h)}_{\text{1-step}}\big]_{i,k}\log \big[P^{(h)}_{\text{1-step}}\big]_{i,k}.
\end{equation}
We then use this uncertainty value to obtain a weight $w_h$ measuring the usefulness of $A^{(h)}$:
\begin{equation}
w_h \, := \, \frac{\exp(-c\!\cdot\!\mathcal{H}^{(h)})}{\sum_{h'} \exp(-c\!\cdot\!\mathcal{H}^{(h')})},
\end{equation}
where $c$ controls how ``peaked'' the distribution of weights is. Following this, the uncertainty-weighted affinity matrix is obtained as
\begin{equation}
A_{\text{weighted}} \, := \, \sum_{h=1}^{H} w_h\!\cdot\!A^{(h)}.
\end{equation}
To use this weighted-variant of global affinities, we simply replace $S_{\text{global}}$ and $S_{\text{local}}$ by their weighted version in Eq. (\ref{eq:label-gen}).

\subsection{Efficient Random-walk Computation}\label{sec:efficient-rw}

\mypar{Truncated random-walk.} When using a single attention head, matrix $S_{\text{global}}$ has the property of being low rank. As described in Supplementary Materials, this enables to compute 
probabilities $P_{\infty}$ exactly and efficiently using the Woodbury matrix inversion identity. However, this property vanishes when combining the attention matrices of multiple heads. To overcome this problem, we instead adopt a truncated random-walk approach, where the propagation is performed for a fixed number of steps $T$:
\begin{equation}
P_{L} \,:=\, \frac{1\!-\!\alpha}{1\!-\!\alpha^{L+
1}}\left(\sum_{t=0}^{L} \alpha^t S^t\right)G
\end{equation}
Because it truncates the random walk, this strategy may lead to different results than original model of infinite random walk. As shown in the following Theorem, this difference quickly becomes small as we increase the length of the walk.

\begin{theoremTwo}
\label{theorem2}
Let $R_L$ be the sum of truncated terms in a random walk of length $L$:
\begin{equation}
R_L \, := \, (1\!-\!\alpha)\left(\sum_{t=L+1}^{\infty}\!\!\alpha^t S^t\right)G. 
\end{equation}
The L1 norm of $R_L$ goes exponentially fast to $0$ as we increase the number of steps $L$.
\end{theoremTwo}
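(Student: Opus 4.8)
The plan is to bound the $L_1$ norm of the residual matrix $R_L$ by pulling the constant factors out of the sum and exploiting the fact that each $S^t$ is row-stochastic, so its entries contribute a controllable amount to any norm. First I would recall that $S$ is row-stochastic by construction (Eq.~(\ref{eq:row-norm})), and that the product of two row-stochastic matrices is again row-stochastic; hence $S^t$ is row-stochastic for every $t\geq 0$, and likewise $S^t G$ is row-stochastic since $G$ is row-stochastic. This means every entry of $S^t G$ lies in $[0,1]$ and each of its $N$ rows sums to exactly $1$, so its entrywise $L_1$ norm $\|S^t G\|_1 = \sum_{i,k} [S^t G]_{i,k}$ equals $N$ regardless of $t$.

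Next I would apply the triangle inequality to the series defining $R_L$. Writing $R_L = (1-\alpha)\sum_{t=L+1}^{\infty} \alpha^t (S^t G)$ and using that $\alpha \in (0,1)$ so each coefficient $\alpha^t$ is positive, I obtain
\begin{equation}
\|R_L\|_1 \,\leq\, (1-\alpha)\sum_{t=L+1}^{\infty} \alpha^t \,\|S^t G\|_1 \,=\, (1-\alpha)\,N\sum_{t=L+1}^{\infty} \alpha^t.
\end{equation}
The remaining geometric tail is elementary: $\sum_{t=L+1}^{\infty} \alpha^t = \alpha^{L+1}/(1-\alpha)$, and substituting this back cancels the $(1-\alpha)$ factor, yielding the clean bound
\begin{equation}
\|R_L\|_1 \,\leq\, N\,\alpha^{L+1}.
\end{equation}
Since $0 < \alpha < 1$, the factor $\alpha^{L+1}$ decays geometrically (hence exponentially) in $L$, which is exactly the claimed statement.

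The only genuine subtlety, and the step I would flag as the main obstacle, is making the interchange of the norm with the infinite sum rigorous and confirming that the matrix powers behave as claimed. I would justify the interchange by noting that the series converges absolutely: the partial sums of $\|R_L\|_1$ are monotone and bounded above by $N\alpha^{L+1}/(1-\alpha) \cdot (1-\alpha) = N\alpha^{L+1}$, so dominated convergence (or simply monotone convergence of nonnegative terms) licenses term-by-term norm bounding. The preservation of row-stochasticity under multiplication is the load-bearing fact here, so I would state it explicitly as a short lemma or inline verification: if $M$ is row-stochastic then $(MS)\mathbf{1} = M(S\mathbf{1}) = M\mathbf{1} = \mathbf{1}$, where $\mathbf{1}$ is the all-ones vector, and nonnegativity is preserved since both factors are nonnegative. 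With these two observations in place, everything else reduces to the geometric-series computation above.
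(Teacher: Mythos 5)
Your proof is correct and follows essentially the same route as the paper: both rest on the observation that $S^tG$ remains row-stochastic so $\|S^tG\|_1 = N$, followed by the geometric tail sum $\sum_{t=L+1}^{\infty}\alpha^t = \alpha^{L+1}/(1-\alpha)$ to arrive at the bound $N\alpha^{L+1}$. The only cosmetic difference is that you use the triangle inequality to get an upper bound while the paper asserts equality (which does hold here since every term is entrywise nonnegative), and you add a brief justification of the norm--sum interchange that the paper omits.
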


\begin{proof}
Since both $S$ and $G$ are stochastic, we have that $S^t G$ is also stochastic, hence $\|S^t G\|_1 = N$. Therefore, 
\begin{align}
\|R_L\|_1 & \, = \, (1\!-\!\alpha) \sum_{t=L+1}^{\infty}\!\!\alpha^t \left\|S^t G\right\|_1 \nonumber\\
& \, = \, N \!\cdot\!(1\!-\!\alpha) \sum_{t=L+1}^{\infty}\!\!\alpha^t \nonumber\\
& \, = \, N \!\cdot\!(1\!-\!\alpha)\left(\sum_{t=0}^{\infty}\alpha^t - \sum_{t=0}^{L}\alpha^t \right)\nonumber\\
& \, = \, N \!\cdot\!(1\!-\!\alpha)\left(\frac{1}{1\!-\!\alpha} - \frac{1\!-\!\alpha^{L+1}}{1\!-\!\alpha}\right) \nonumber\\
& \, = \, N \!\cdot\!\alpha^{L+1}.
\end{align}
\end{proof}
For $\alpha\!<\!1$, $\|R_L\|_1$ will hence go to zero exponentially fast (geometric decay) as $L \!\to\! \infty$. 

\mypar{Iterative computation.} To compute the truncated random-walk efficiently, we employ an iterative strategy where the label probabilities are updated as follows:
\begin{equation}\label{eq:update}
\wt{P}_{L} \, := \, (1\!-\!\alpha)G + \alpha S\wt{P}_{L-1}, \ \ P_{L} \, := \, \frac{1}{1\!-\!\alpha^{L+1}} \wt{P}_{L} 
\end{equation}
where $\wt{P}_{L}$ is the unnormalized partial sum defined in Eq. (\ref{eq:partial-sum}) and $\wt{P}_{0} = (1\!-\!\alpha)G$. Denote as $S^{(h)} := \beta S_{\text{global}}^{(h)} + (1\!-\!\beta)S_{\text{local}}^{(h)}$ the transition matrix of diffusion model's attention head $h$. Matrix $S^{(h)}_{\text{global}}$ can be expressed in a low rank form as
\begin{equation}
S^{(h)}_{\text{global}} \, := \, \wt{Q}^{(h)}\tr{(K^{(h)})}
\end{equation}
where
\begin{equation}
\mr
{Diag}\big(Q^{(h)}\tr{(K^{(h)})}\One)\big)^{-1}Q^{(h)}
\end{equation}
and $\One$ is a vector with all ones. Expanding Eq. (\ref{eq:update}), we then get
\begin{align}
\wt{P}_{L} & \, := \, (1\!-\!\alpha)G \ + \ \alpha\sum_{h=1}^H w_h \Big(\beta\!\!\underbrace{\wt{Q}^{(h)}\big(\tr{(K^{(h)})} \wt{P}_{L-1}\big)}_{\mathcal{O}(N^2\cdot K \cdot D) \, + \, \mathcal{O}(N \cdot K \cdot D^2)} \nonumber\\[-2pt]
& \qquad \ \ + \ (1\!-\!\beta)\underbrace{S^{(h)}_{\text{local}}\,\wt{P}_{L-1}}_{\mathcal{O}(N\cdot K)}\Big).
\end{align}

For each update step $L$ and every head $h$, computing the \emph{global} label probabilities using the low rank decomposition has complexity in $\mathcal{O}(N^2\!\cdot\!K \!\cdot\! D) \, + \, \mathcal{O}(N \!\cdot\! K \!\cdot\! D^2)$, where $N$ is the number of spatial tokens, $K$ the number of classes and $D$ is the dimension of token features. Likewise, obtaining the \emph{local} probabilities with sparse matrix $S^{(h)}_{\text{local}}$ has a complexity in $\mathcal{O}(N \!\cdot\! K)$. For large images, we have that $D \ll N$ thus the overall complexity simplifies to $\mathcal{O}(N^2 \!\cdot\! K \!\cdot\! D)$. This sharply contrasts with the $\mathcal{O}(N^3 \!\cdot\! K)$ complexity of approaches like iSeg \cite{iseg} that do not leverage the low rank property of attention matrices. The computation of segmentation probabilities is further detailed in the Supplmental Materials.

\section{Experiments}
\label{sec:experiment}

\subsection{
Setup} \label{sec:exp_setup}

\mypar{Datasets.}
We evaluate \ours{} 
on five standard OVSS benchmarks:
PASCAL VOC 2012 (V21) \cite{voc12},
ADE20K-150 (ADE) \cite{ade20k},
PASCAL Context (PC60) \cite{pascalcontext},
COCO-Stuff (C-Stf) \cite{coco},
COCO-Object (C-Obj) \cite{actualcoco}. 
Also, alongside the original benchmarks on these datasets, we follow \cite{iseg} and evaluate on variants of PASCAL VOC 2012 (V20) and PASCAL Context (PC59) in which the background class is removed from the evaluation. 

\mypar{Baselines.}
Our method is compared with a relevant set of works in OVSS, including: 
CLIP ~\cite{clip}, MaskCLIP ~\cite{maskclip}, GroupViT ~\cite{wysoczanska2024clip}, GEM ~\cite{bousselham2024grounding}, SCLIP ~\cite{sclip}, NACLIP ~\cite{naclip}, iSeg ~\cite{iseg}, ClearCLIP ~\cite{clearclip}, ProxyCLIP ~\cite{proxyclip}, OVDiff ~\cite{ovdiff}, TagCLIP ~\cite{lin2024tagclip}, CaR ~\cite{car}, and PixelCLIP ~\cite{pixelclip}.

\mypar{Implementation Details.}
For our experiments we employ pretrained CLIP-ViT ~\cite{clip}, with  ViT-L/14 backbone, and Stable Diffusion V2.1~\cite{stablediffusion}. We set the category descriptions as in SCLIP~\cite{sclip}. Furthermore, the input images are resized to 336\,$\times$\,336. We implemented our approach on a single RTX A6000 with 48G and utilizing less than 10G memory, with the inference being done with batch size of 1 using half precision. Last, we resort to mIoU as the evaluation metric across all experiments. 


\begin{table*}[htb]
    \centering
    \caption{
    \textbf{Quantitative evaluation across five datasets and two of their variants.} The first three benchmarks (V21, PC60, and C-Obj) include a background category, whereas the subsequent ones do not. The \textit{Post.} column indicates whether an approach employs post-processing for mask refinement. For explanations of abbreviated benchmark names, refer to \cref{sec:exp_setup}.
    \vspace{-3pt}
    }    
    \resizebox{\linewidth}{!}{
        \begin{tabular}{l@{\hskip 5px}lcllllllllll}
        \toprule
        \textbf{Method}                                                    &&  Post.& V21  & PC60 & C-Obj & V20 & ADE & PC59  & C-Stf & Avg \\
        \midrule
        CLIP~\cite{clip}                    & {\scriptsize ICML'21}        & \xmark & 18.6 &  \phz7.8 &  \phz6.5          & 49.1          &  \phz3.2 & 11.2  &  \phz5.7 & 13.6 \\ 
        MaskCLIP~\cite{maskclip}            & {\scriptsize ECCV'22}        & \xmark & 43.4 & 23.2 & 20.6          & 74.9          & 16.7 & 26.4  & 16.7 & 30.3 \\
        GroupViT~\cite{groupvit}            & {\scriptsize CVPR'22}        & \xmark & 52.3 & 18.7 & 27.5          & 79.7          & 15.3 & 23.4  & 15.3 & 30.7 \\
        CLIP-DIY~\cite{wysoczanska2024clip} & {\scriptsize WACV'24}        & \xmark & 59.0 & -    & 30.4          & -             & -    & -      & -    & -    \\
        GEM~\cite{bousselham2024grounding}  & {\scriptsize CVPR'24}        & \xmark & 46.2 & -    & -             & -             & 15.7    & 32.6  & -    & -    \\
        SCLIP~\cite{sclip}                  & {\scriptsize ECCV'24}        & \xmark & 59.1 & 30.4 & 30.5 & 80.4 & 16.1  & 34.2  & 22.4 & 38.2 \\
        PixelCLIP~\cite{pixelclip}          & {\scriptsize NeurIPS'24}         & \xmark & - & - & -          & \underline{85.9}  & \underline{20.3} & \underline{37.9} & \underline{23.6} & - \\

        NACLIP~\cite{naclip}                & {\scriptsize WACV'25}        & \xmark & 58.9 & \underline{32.2} & 33.2 & 79.9 & 17.4  & 35.2  & 23.3 & \underline{39.4} \\
        iSeg~\cite{iseg}                    & {\scriptsize arXiv'24}          & \xmark & \underline{68.2} & 30.9 & \underline{38.4}          & - & - & - &  - & - \\
        \rcol \textbf{\ours{}}  & {\scriptsize Ours} &  \xmark & \textbf{69.7\better{1.5}} & \textbf{37.7\better{5.5}} & \textbf{43.3\better{4.9}} & \textbf{90.1\better{4.2}} & \textbf{24.0\better{3.7}} & \textbf{43.4\better{5.5}} & \textbf{28.8\better{5.2}} & \textbf{48.1\better{4.3}} \\  
        \midrule
                SCLIP~\cite{sclip}                  & {\scriptsize ECCV'24}       & \cmark & 61.7 & 31.5 & 32.1          & 83.5 & 17.8 & 36.1 & 23.9 & 40.1 \\
        ClearCLIP~\cite{clearclip}          & {\scriptsize ECVA'24}         & \cmark & 46.1 & 26.7 & 30.1          & 80.0 & 15.0 & 29.6  & 19.9 & 35.3 \\
        ProxyCLIP~\cite{proxyclip}          & {\scriptsize ECCV'24}         & \cmark & 60.6 & 34.5 & \underline{39.2}          & 83.2 & \underline{22.6} & 37.7  & 25.6 & \underline{43.3} \\
        OVDiff~\cite{ovdiff}                & {\scriptsize CVPR'24}         & \cmark & -    & -    & -          & 80.9 & 14.1 & 32.2 & 20.3  & - \\
        CaR~\cite{car}                      & {\scriptsize CVPR'24}         & \cmark & \underline{67.6} & 30.5 & 36.6          & \textbf{91.4} & 17.7 & \underline{39.5} & -  & - \\
        NACLIP~\cite{naclip}                & {\scriptsize WACV'25}       & \cmark & 64.1 & \underline{35.0} & 36.2 & 83.0 & 19.1 & 38.4 & \underline{25.7} & 42.5 \\ 
        \rcol \textbf{\ours{}}  & {\scriptsize Ours}  & \xmark & \textbf{69.7\better{2.1}} & \textbf{37.7\better{2.7}} & \textbf{43.3\better{4.1}} & \underline{90.1}\worse{1.3} & \textbf{24.0\better{1.4}} & \textbf{43.4\better{3.9}} & \textbf{28.8\better{3.1}} & \textbf{48.1\better{2.3}} \\  
        \bottomrule
        \end{tabular}
        }
    \label{tab:main}
\end{table*}

\subsection{Main Results}

\mypar{Comparison to training-free OVSS methods.}
We report a comprehensive evaluation of \ours{} 
in \cref{tab:main},  
where results show that \textbf{\ours{} consistently outperforms prior 
approaches without post-processing across all datasets}. On V21, our method achieves an mIoU of 69.4\%, surpassing SCLIP (59.1\%) and NACLIP (58.9\%), and also exceeding the recent diffusion-based iSeg~\cite{iseg} (68.2\%). On PC60 and C-Obj, \ours{} obtains 37.6\% and 43.3\% mIoU respectively, outperforming all baselines without post-processing by a substantial margin. Notably, even the best baseline without post-processing, i.e., NACLIP, falls short by over 5\% 
on both PC60 and C-Obj.

Our method also demonstrates strong generalization on benchmarks that exclude background classes, where localization precision is especially critical. On V20, \ours{} achieves an mIoU of 90.1\%, improving over both GroupViT (79.7\%) and ProxyCLIP (83.2\%), and approaching CaR (91.0\%), a method that requires post-processing. 
Similarly, on PC59 and C-Stf, \ours{} attains 24.0\% and 43.4\%, substantially outperforming SCLIP (34.2\%, 22.4\%) and NACLIP (35.2\%, 23.3\%). Importantly, many of the top-performing methods in recent literature rely on post-processing strategies such as Conditional Random Fields (CRF) or Pixel-Adaptive Mask Refinement (PAMR), as indicated by the (\cmark{}) in Tab~\ref{tab:main}. For example, ProxyCLIP, CaR, and ClearCLIP all incorporate such steps to increase performance. In contrast, \ours{} achieves state-of-the-art results \emph{without any} post-processing (\xmark), relying solely on a single forward pass with our entropy-guided attention fusion and local–global affinity refinement.

Overall, \ours{} sets a novel performance benchmark among training-free approaches, with an average mIoU of \textbf{48.1\%} across all datasets. These results underscore the effectiveness of our design in producing high-quality, spatially consistent segmentations, without relying on additional supervision, or post-hoc refinement.

\subsection{Ablations Study}

\mypar{Comparison with Weakly‐Supervised Semantic Segmentation (WSSS).} To demonstrate the quality of the pseudo‐masks generated by \ours{}, we compare it against several state‑of‑the‑art WSSS methods. Tab.~\ref{tab:wsss} reports mIoU on the PASCAL VOC 2012 and COCO training sets\footnote{Note that “Training” indicates whether a method requires any per‑dataset learning to generate pseudo masks.}. Our method 
yields the highest mIoU on both benchmarks, improving over prior 
training‑free approaches. 
This underscores the benefit of our entropy‐guided fusion and random‑walk refinement, which inject spatial coherence without any additional optimization.

\begin{table}[t]
\centering
\footnotesize
\setlength{\tabcolsep}{4.5pt}
    \caption{
    \textbf{Comparison of pseudo‐mask generation methods for weakly‐supervised semantic segmentation.}
    We report mIoU on PASCAL VOC 2012 and MS-COCO training sets. 
    \vspace{-3pt}
    }    
    \resizebox{\columnwidth}{!}{
        \begin{tabular}{l|cccc}
        \toprule
        \textbf{Method} &  & Traning & VOC & COCO \\
        \midrule
        MCTformer+ ~\cite{mct}                    & {\scriptsize PAMI'24}        & \cmark & 68.8 &  - \\ 
        ToCo ~\cite{toco}            & {\scriptsize CVPR'23}        & \cmark & 72.2 & - \\
        WeakTr~\cite{weaktr}            & {\scriptsize CVPR'23}        & \cmark & 66.2 & - \\
        CLIMS~\cite{clims} & {\scriptsize CVPR'22}        & \cmark & 56.6 & -    \\
        CLIP-ES~\cite{clipes}  & {\scriptsize CVPR'23}        & \xmark & 70.8 & 39.7    \\
        DiffSegmenter~\cite{diffseg}                  & {\scriptsize PAMI'25}       & \xmark & 70.5 & - \\
        T2M~\cite{t2m}                  & {\scriptsize NeuroC'24}       & \xmark & 72.7 & 43.7 \\
        iSeg~\cite{iseg}                  & {\scriptsize arXiv'24}       & \xmark & 75.2 & 45.5 \\
        \rcol \textbf{\ours{}}\,(for iSeg) & {\scriptsize Ours} &  \xmark & \textbf{76.2} & \textbf{45.7}\\  
        \bottomrule
\end{tabular}
}
\vspace{-6pt}
\label{tab:wsss}
\end{table}

\mypar{Multi‑Head Attention Fusion in Stable Diffusion (Tab.~\ref{tab:ablation_attention_fusion}).}
To isolate the effect of our entropy‑guided fusion mechanism on the multi‑head attention maps produced by the Stable Diffusion backbone, we compare three aggregation strategies: (1) \emph{Single}, which selects the best‐performing head; (2) \emph{Mean}, which averages all heads uniformly; and (3) \emph{Weighted Mean}, which fuses heads using entropy‑based weights that amplify low‑entropy (i.e., more focused) heads. 
As the results show, \emph{Weighted Mean} consistently outperforms both \emph{Single} and \emph{Mean}, achieving 69.4\% on VOC (a +2.5 improvement over Single and +2.8 over Mean), 37.6\% on Context (+1.0 over Single and +1.1 over Mean), and 43.3\% on Object (+1.3 over Single and +1.6 over Mean). These gains confirm that entropy‑guided weighting effectively suppresses noisy or diffuse heads while emphasizing the most informative ones, yielding more accurate segmentation masks. Consequently, we adopt \emph{Weighted Mean} as our default fusion strategy in all subsequent experiments.

\begin{table}[h!]
\centering
\footnotesize
\setlength{\tabcolsep}{12pt}
  \caption{%
    \textbf{Comparison of multi‑head attention fusion strategies in Stable Diffusion.}
    “Single” selects the best head, “Mean” averages all heads, and “Weighted Mean” uses entropy‑based head weighting.
  }\label{tab:ablation_attention_fusion}
\vspace{-3pt}
\resizebox{\columnwidth}{!}
{
\begin{tabular}{l|ccc}
\toprule
Type        & VOC                      & Context       & Object        \\ 
\midrule
Single      & 67.2                     & 36.6          & 42.0          \\
Mean        & 66.9                  & 36.5          & 41.7         \\
\rcol
Weighted Mean
        & \textbf{69.7} 
        & \textbf{37.6} & \textbf{43.3} \\ \bottomrule
\end{tabular}
}
\vspace{-6pt}
\end{table}

\begin{table}[t]
\centering
\footnotesize
\setlength{\tabcolsep}{9.5pt}
\caption{\textbf{Ablation on different affinity maps.} Global represents the affinity map coming directly from the self attention layer. Local represents the affinity constructed using cosine similarity explained above, whereas RW represents the stochastic random walk process used for mask refinement.
\vspace{-3pt}
}
\resizebox{\columnwidth}{!}
{
\begin{tabular}{l|ccc}
\toprule
Type        & VOC                      & Context       & Object        \\ 
\midrule
Global      & 63.3                     & 35.0          & 39.8          \\
Local        & 58.7                  & 34.5          & 37.6         \\
\rcol
Global + Local + RW
        & \textbf{69.7} 
        & \textbf{37.6} & \textbf{43.4} \\ \bottomrule
\end{tabular}
}
\vspace{-6pt}
\label{tab:local_global}
\end{table}

\begin{figure*}[ht!]
  \centering
  \begin{minipage}[b][][b]{.38\textwidth}
  \includegraphics[width=\linewidth]{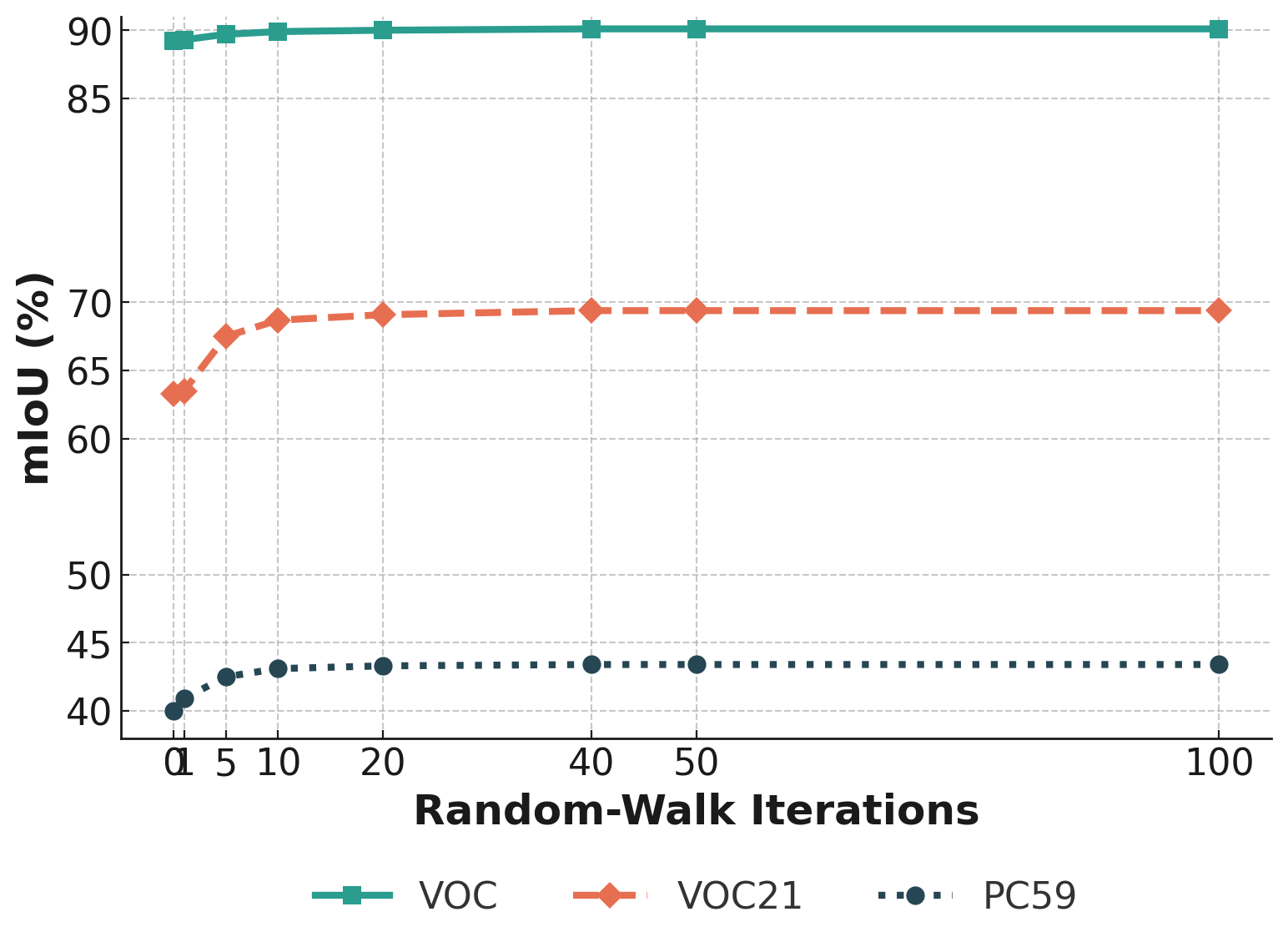}
\caption{%
  \textbf{Impact of random‑walk iteration count on segmentation mIoU.}
  Evolution of mIoU on PASCAL VOC, VOC21, and PC59  as a function of the number of stochastic random‑walk refinement steps, demonstrating rapid gains up to 20 iterations and saturation thereafter.
}
  \label{fig:miou_plot}
  \end{minipage}
  \hfill
  \begin{minipage}[b][][b]{.595\textwidth}
  \includegraphics[width=\linewidth]{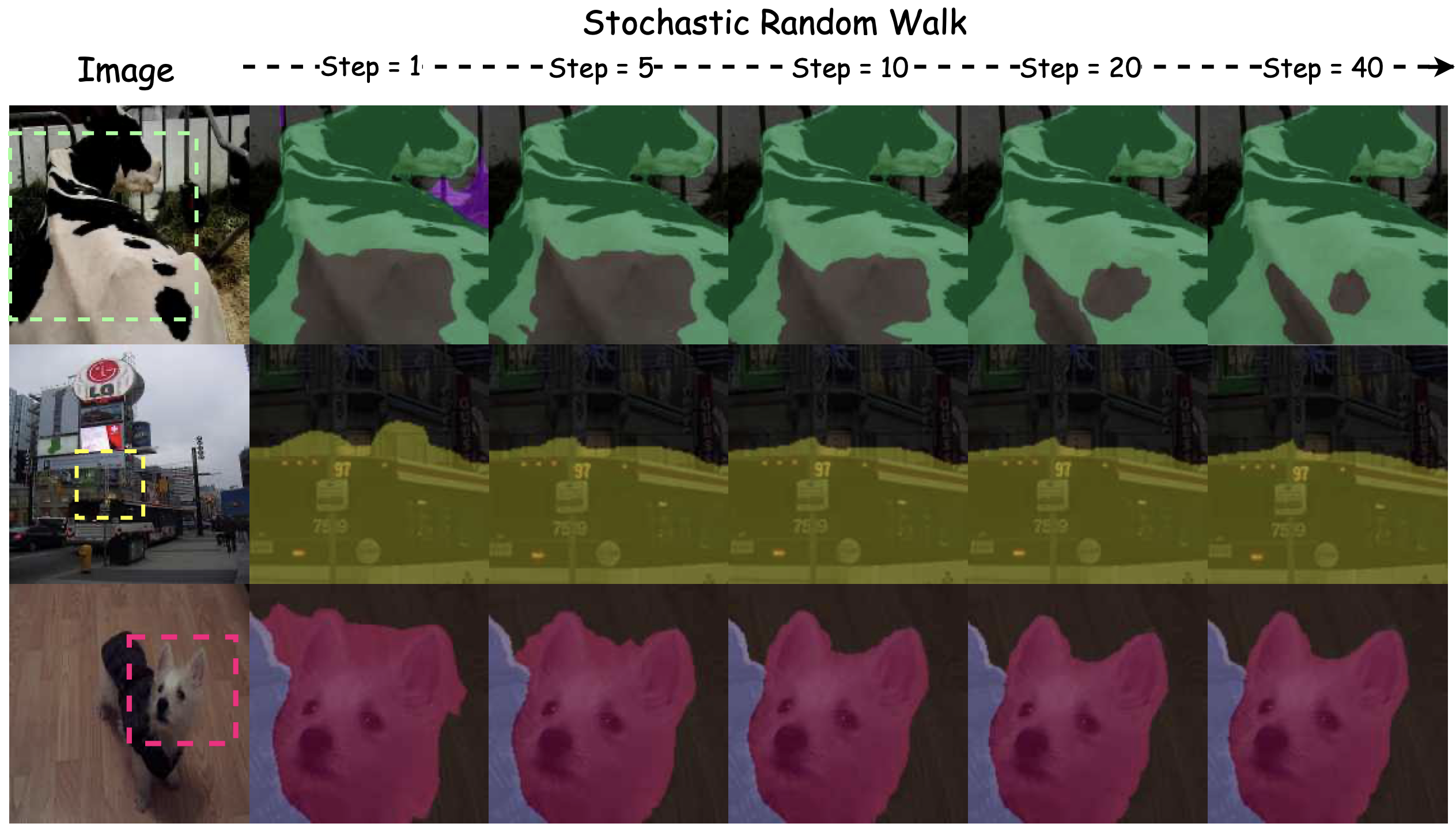}
    \caption{
    \textbf{Progressive segmentation refinement via Stochastic Random Walk.}
    For each image, we visualize the effect of increasing random walk steps on the segmentation map. Regions of interest are highlighted with colored boxes. Our method produces increasingly accurate and coherent segmentations over time, illustrating strong local–global propagation.}
    \label{fig:qualitative_rw}
    \label{fig:qualitative}
  \end{minipage}
\end{figure*}

\mypar{Affinity Map Variants (Table~\ref{tab:local_global}).}
We evaluate three types of affinity maps: (1) \emph{Global}, derived directly from raw self-attention affinity; (2) \emph{Local}, built via cosine similarity between neighboring tokens; and (3) \emph{Global+Local+RW}, which refines the combined affinity through a stochastic random-walk (RW). As shown in Table \ref{tab:local_global}, the \emph{Global+Local+RW} configuration consistently outperforms others (e.g., achieving 69.7\% on VOC). This confirms that integrating global semantic cues, local spatial coherence, and RW-based refinement significantly boosts segmentation accuracy.

\mypar{Effect of Random‑Walk Iteration Count on Segmentation mIoU (Fig.~\ref{fig:miou_plot}).}
As shown in Fig.~\ref{fig:miou_plot}, segmentation performance steadily improves as the number of random‑walk iterations increases. Starting from the 0-step baseline (89.2\% in VOC, 63.3\% in VOC21, and 40.0\% in PC59), 5 random-walk iterations raise mIoU to 89.7\%, 67.5\%, and 42.5\%, respectively. At 10 steps, performance increases to 89.9\%, 68.7\%, and 43.1\%. Refinement with 40 iterations yields mIoU values of  90.1\%, 69.4\%, and 43.4\% on the three benchmarks. Beyond 40 steps, results plateau, indicating convergence.

\subsection{Qualitative Results}

To explore the effect of stochastic random walk, we visualize its progression 
across different steps in Fig.~\ref{fig:qualitative}. We highlight and zoom in on a small region (marked by colored bounding boxes) to show the segmentation output at steps: 1, 5, 10, 20, 40. We notice that in the early steps (i.e., Steps 1 to 5), it can propagate the semantic structure to its neighbors. As the steps increase, the segmentation boundaries become sharper. Notably, at Step 40, the segmentation boundaries are well aligned with visual cues. This improvement is evident across various objects and scenes. In the first row, the green mask of the cow becomes more complete as the walk progresses, while in the second row, even finer details on the bus are filled in by the yellow mask despite the cluttered background. Finally, in the last row, the pink mask on the dog's face noticeably becomes more accurate to the edges. This demonstrates our method's robustness across all objects and contexts. 


\section{Conclusion}
\label{sec:conclusion}

Utilizing the zero-capability of CLIP and Stable Diffusion in the OVSS 
domain, this method achieves a significant performance boost and is emerging as an alternative to closed-set supervised training. Our method \textbf{\ours{}} has three main components: an entropy-based weighted mean of the global and local affinity map, and a stochastic random walk for affinity map refinement. The local-global affinity aims to reduce the irrelevant global information typically generated by the self-attention map, and the stochastic random walk, in combination, pays more attention to the features of the segmented class, making this method preferable for semantic segmentation tasks. Extensive experiments on the well-known and recent OVSS benchmark show the superiority of the existing OVSS methods. The empirical observations highlight that our approach yields state-of-the-art performance on post-processing-free methods, and even with methods that require post-processing, our method outperforms 6 out of 7 benchmarks. \ours{}, does not require access to any labelled or unlabelled data, which makes it suitable for real-world applications. 

{
    \small
    \bibliographystyle{ieeenat_fullname}
    \bibliography{main}
}

\end{document}